\theoremstyle{plain}
\newtheorem{proposition}{Proposition}
\def\X{{\mathcal X}}
\def\Y{{\mathcal Y}}
\def\A{{\mathcal A}}
\def\S{{\mathcal S}}
\def\E{{\mathbb E}}
\def\P{{\mathbb P}}
\def\R{{\mathbb R}}
\def\1{{\bm 1}}
\def\vlambda{{\bm \lambda}}
\def\ve{{\bm e}}
\def\l{{\ell}}
\def\hl{{\hat \l}}
\def\q{{\hat q}}
\def\hlambda{{\hat \lambda}}
\def\hvlambda{{\hat \vlambda}}
\def\I{{\bar I}}
\def\tvlambda{{\widetilde \vlambda}}
\def\cal{{\text{cal}}}
\def\opt{{\text{opt}}}
\def\st{\text{s.t.}}
\def\sem{{\text{sem}}}
\def\bsem{{\overline{\text{sem}}}}
\def\RCPS{{\text{RCPS}}}
\def\kRCPS{{K\text{-}\RCPS}}
\def\CRC{{\text{CRC}}}
\def\kCRC{{K\text{-}\CRC}}
\def\semCRC{{sem\text{-}\CRC}}
\def\bsemCRC{{\overline{sem}\text{-}\CRC}}
\newcommand{\at}[1]{^{(#1)}}
\newcommand{\smallsize}{\fontsize{8pt}{20pt}\selectfont}
\DeclareMathOperator*{\argmin}{\arg\min}
\title{\textbf{Conformal Risk Control for Semantic Uncertainty\\Quantification in Computed Tomography}}
\author{Jacopo Teneggi\\\texttt{jtenegg1@jhu.edu}}
\author{
    Jacopo~Teneggi$^{1,2}$\\
    \texttt{\small jtenegg1@jhu.edu}
    \and
    J Webster Stayman$^{3}$\\
    \texttt{\small web.stayman@jhu.edu}
    \and
    Jeremias~Sulam$^{1,3}$\\
    \texttt{\small jsulam1@jhu.edu}
}
\date{
    \centering
    \small
    \vspace{1em}
    $^1$Mathematical Institute for Data Science (MINDS), Johns Hopkins University\\
    $^2$Department of Computer Science, Johns Hopkins University\\
    $^3$Department of Biomedical Engineering, Johns Hopkins University\\
    \rule{0.05\linewidth}{.1pt}\\
    \begin{minipage}{0.6\linewidth}
        \centering
        \begin{align*}
        \text{code:}    &&~\text{\scriptsize\url{https://github.com/Sulam-Group/semantic_uq}}
        \end{align*}
    \end{minipage}
}
\begin{document}
\maketitle
\begin{abstract}
Uncertainty quantification is necessary for developers, physicians, and regulatory agencies to build trust in machine learning predictors and improve patient care. Beyond measuring uncertainty, it is crucial to express it in clinically meaningful terms that provide actionable insights. This work introduces a conformal risk control (CRC) procedure for organ-dependent uncertainty estimation, ensuring high-probability coverage of the ground-truth image. We first present a high-dimensional CRC procedure that leverages recent ideas of length minimization. We make this procedure semantically adaptive to each patient's anatomy and positioning of organs. Our method, $\semCRC$, provides tighter uncertainty intervals with valid coverage on real-world computed tomography (CT) data while communicating uncertainty with clinically relevant features.
\end{abstract}

\section{Introduction}
Deep learning predictors are becoming ubiquitous in solving inverse problems in medical imaging, with remarkable performance across diverse modalities and organ systems. Point predictors, however, are limited in their ability to quantify uncertainty, as is often necessary for developers, physicians, and regulatory agencies to verify the safety and reliability of these models in real-world clinical settings. For example, it has been shown that diffusion models can hallucinate the details of a patient's anatomy \cite{tivnan2024hallucination,webber2024diffusion}, and robust notions of predictive uncertainty could ameliorate these issues. At the same time, several studies have highlighted the benefits of including uncertainty estimates in computer-aided decision making processes \cite{mccrindle2021radiology,faghani2023quantifying,maruccio2024clinical,salvi2025explainability}. This motivates communicating uncertainty in a clinically informed or clinically relevant manner.

Conformal risk control (CRC) \cite{angelopoulos2024conformal} addresses the challenges of measuring the uncertainty of black-box systems without assuming a predictive distribution, having found numerous applications in medicine \cite{hulsman2024conformal,angelopoulos2024conformaltriage,kutiel2023conformal,teneggi2023trust,angelopoulos2022image}. In imaging, CRC constructs pixel-wise intervals by starting from heuristic notions of uncertainty (e.g., quantile regression \cite{koenker1978regression}, MC-Dropout \cite{gal2016dropout}, or variance of the samples from a diffusion model \cite{teneggi2023trust}), and then conformalizing the resulting sets to achieve risk control. How to minimize interval length in high-dimensional settings is the subject of ongoing research \cite{kiyani2024length,bars2025volume,belhasin2023principal,nehme2023uncertainty}.

In this work, we observe that patients' anatomies vary in size, shape, and positioning of organs, and these variations may unintentionally inflate interval length. We propose to construct \emph{organ-dependent} uncertainty intervals that encompass semantic structures beyond pixels. We achieve this by extending the CRC-equivalent of the $\kRCPS$ procedure \cite{teneggi2023trust}, minimizing the mean interval length via convex optimization. Not only does our method, $\semCRC$, provide tighter intervals, but it can also guarantee the same level of risk control for each organ rather than cumulatively over a scan. We evaluate our method on quantile regression for CT denoising and a simple FBP-UNet reconstruction pipeline using two real-world datasets: TotalSegmentator \cite{wasserthal2023totalsegmentator} and FLARE23 \cite{ma2022fast}. Our contributions apply broadly to any imaging inverse problem and any predictor equipped with a heuristic notion of uncertainty.

\section{Background}
Recall that in inverse problems, we aim to retrieve an underlying signal $X \in \X$ from measurements $Y \in \Y$, where $Y = \A(X)$ and the operator $\A: \X \to \Y$ cannot be directly inverted (e.g., due to being ill-posed or affected by noise). Herein, we let $\X$ be the space of $d$-dimensional images, i.e. $\X \subseteq [0,1]^d$.

\paragraph{\textbf{Quantile regression.}} A common approach to solving inverse problems is to train a \emph{point predictor} $f: \Y \to \X$ that minimizes a loss function $L(f(y), x)$ over a dataset $\{(X\at{i}, Y\at{i})\}_{i=1}^n$ of ground-truth signals with their measurements. For example, if $L$ is the squared error then $f(Y) \approx \E[X \mid Y]$. Differently, quantile regression trains a \emph{set predictor} $g: \Y \to 2^{\X}$ such that $\forall j \in [d]$, $g(y)_j = [\q_{\alpha}(y)_j, \q_{1-\alpha}(y)_j]$ where $\q_t(Y)_j$ is the estimate of the $t$-level quantile of $\P[X_j \mid Y]$, which can be learned by minimizing the pinball loss \cite{koenker1978regression}. Thus, quantile regression provides an estimate of uncertainty with intervals length.

\paragraph{\textbf{Conformal risk control (CRC).}} The goal of conformal risk control \cite{angelopoulos2024conformal} is to post-process a fixed set predictor $g$ to bound the expectation of its error. More formally, denote $\{g_{\lambda}\}_{\lambda \in \R_{\geq 0}}$ the family of nested predictors with
\begin{equation}
    g_{\lambda}(y)_j = [\q_{\alpha}(y)_j - \lambda, \q_{1-\alpha}(y)_j + \lambda],
\end{equation}
and let $\l(g_{\lambda}(y),x)$ be any bounded, non-increasing function of $\lambda$. Following prior work \cite{angelopoulos2022image,teneggi2023trust}, we will consider the proportion of ground-truth pixels that fall outside of their intervals, i.e.
\begin{equation}
    \label{eq:l01}
    \l^{01}(g_{\lambda}(y),x) = \frac{1}{d} \sum_{j \in [d]} \1\{x_j \notin g_{\lambda}(y)_j\},
\end{equation}
which is monotonically non-increasing in $\lambda$ and bounded by 1. Then, for any tolerance $\epsilon > 0$, one can find the parameter $\hlambda$ that controls the loss in \eqref{eq:l01}. In particular, given a calibration set $S_{\cal} = \{(X\at{i}, Y\at{i})\}_{i=1}^{n_{\cal}}$, and a test point $(X,Y)$ of exchangeable observations independent of $g$, the choice of
\begin{equation}
    \hlambda = \inf \left\{\lambda \in \R_{\geq 0}: \frac{n_{\cal}}{n_{\cal} + 1} \hl^{01}_{\cal}(\lambda) + \frac{1}{n_{\cal} + 1} \leq \epsilon\right\}
\end{equation}
where $\hl^{01}_{\cal}(\lambda) = 1/n_{\cal} \sum_{(x,y) \in S_{\cal}} \l^{01}(g_{\lambda}(y),x)$ guarantees that
\begin{equation}
    \label{eq:crc}
    \E[\l^{01}(g_{\hlambda}(Y),X)] \leq \epsilon,
\end{equation}
where the expectation is taken over $S_{\cal}$ and $(X,Y)$.

\paragraph{\textbf{High-dimensional risk control.}} As noted by \cite{teneggi2023trust}, using the same scalar $\lambda$ for all pixels inflates the mean interval length of the conformalized sets. To overcome this limitation, they propose to assign each pixel to one of $K$ groups with some shared statistics. More precisely, they consider a partition matrix $M \in \{0,1\}^{d \times K}$, and use a vector-valued parameter $\vlambda_K = [\lambda_1, \dots, \lambda_K] \in \R^K_{\geq 0}$ such that $\vlambda = M\vlambda_K \in \R^d_{\geq 0}$ and
\begin{equation}
    \label{eq:pixel_uq}
    g_{\vlambda}(y)_j = [\q_{\alpha}(y)_j - \lambda_j, \q_{1-\alpha}(y)_j + \lambda_j].
\end{equation}
Then, for a fixed anchor point $\tvlambda_K \in \R^K_{\geq 0}$, choosing
\begin{equation}
    \label{eq:krcps}
    \hvlambda = \inf\left\{\vlambda \in M\tvlambda_K + \omega \1_d,~\omega \in \R: \frac{n_{\cal}}{n_{\cal}+1} \hl^{01}_{\cal}(\vlambda) + \frac{1}{n_{\cal} + 1} \leq \epsilon\right\}
\end{equation}
controls risk as in \eqref{eq:crc}. Note that \cite{teneggi2023trust} introduced their method for risk controlling prediction sets (RCPSs) \cite{bates2021distribution}, but it applies to CRC as well. The anchor $\tvlambda_K \in \R^K_{\geq 0}$ is arbitrary, but it should be chosen to minimize the mean interval length. The proposed method, $\kCRC$, introduces $\l^{\gamma}$ for $\gamma \in (0,1)$: a convex upper-bound to $\l^{01}$. Then, it solves the following optimization problem
\begin{equation}\tag{P$K$}
    \label{eq:pk}
    \tvlambda_{K} = \argmin_{\vlambda_K \in \R^K_{\geq 0}}~\sum_{k \in [K]} n_k\lambda_k~\quad\st\quad~\hl^{\gamma}_{\opt}(M\vlambda_K)\leq \epsilon,
\end{equation}
where $n_k$ is the number of pixels in group $k$. We stress that in this procedure, the calibration set $S_{\cal}$ needs to be split in $S_{\opt}$ and $\widetilde{S}_{\cal}$, such that the former is used to solve \eqref{eq:pk} and the latter to find $\hvlambda$ as in \eqref{eq:krcps}.

With this background, we now present the main contributions of our work.

\section{Semantic Uncertainty Quantification}
Observe that the partition matrix $M$ that assigns each of the $d$ pixels to one of $K$ groups does not depend on the measurement, $y$. This choice is effective when the semantic content of each pixel is similar across observations (e.g., face images can be aligned and centered). However, CT data is heterogeneous, and a fixed partition matrix may unnecessarily increase the mean interval length.

In this work, we leverage foundational segmentation models \cite{qu2023abdomenatlas,li2024well} to construct organ-dependent uncertainty intervals. Our method, $\semCRC$, extends $\kCRC$ to instance-dependent memberships $s(y) \in [K]^d$. This decouples optimizing the mean interval length from the pixel domain, and it reflects the uncertainty of the model in terms of semantic---and clinically meaningful---structures. Formally, let $s: \Y \to [K]^d$ be a fixed segmentation model such that, for a vector $\vlambda_{\sem} \in \R^K_{\geq 0}$, the family of nested set predictors $\{g_{\vlambda_{\sem}}\}$ is given by
\begin{equation}
    \label{eq:sem_uq}
    g_{\vlambda_{\sem}}(y)_j = [\q_{\alpha}(y)_j - \lambda_{s(y)_j}, \q_{1-\alpha}(y)_j + \lambda_{s(y)_j}].
\end{equation}
Note that, differently from $g_{\vlambda}(y)$ in \eqref{eq:pixel_uq}, the same pixel $j$ may receive different assignments in different scans depending on the measurement $y$. Our work does not study the performance of $s$, and calibration of segmentation models is subject of complementary research \cite{mossina2024conformal,wundram2024conformal,davenport2024conformal,brunekreef2024kandinsky}. We will proceed analogously to the above, i.e. finding an anchor $\tvlambda_{\sem}$ that minimizes the mean interval length $\I_{\vlambda_{\sem}}(y)$, and then backtracking along the line $\tvlambda_{\sem} + \omega \1_K$ to control risk. 

\begin{figure}[t]
    \centering
    \subcaptionbox{TotalSegmentator.}{\includegraphics[width=\linewidth]{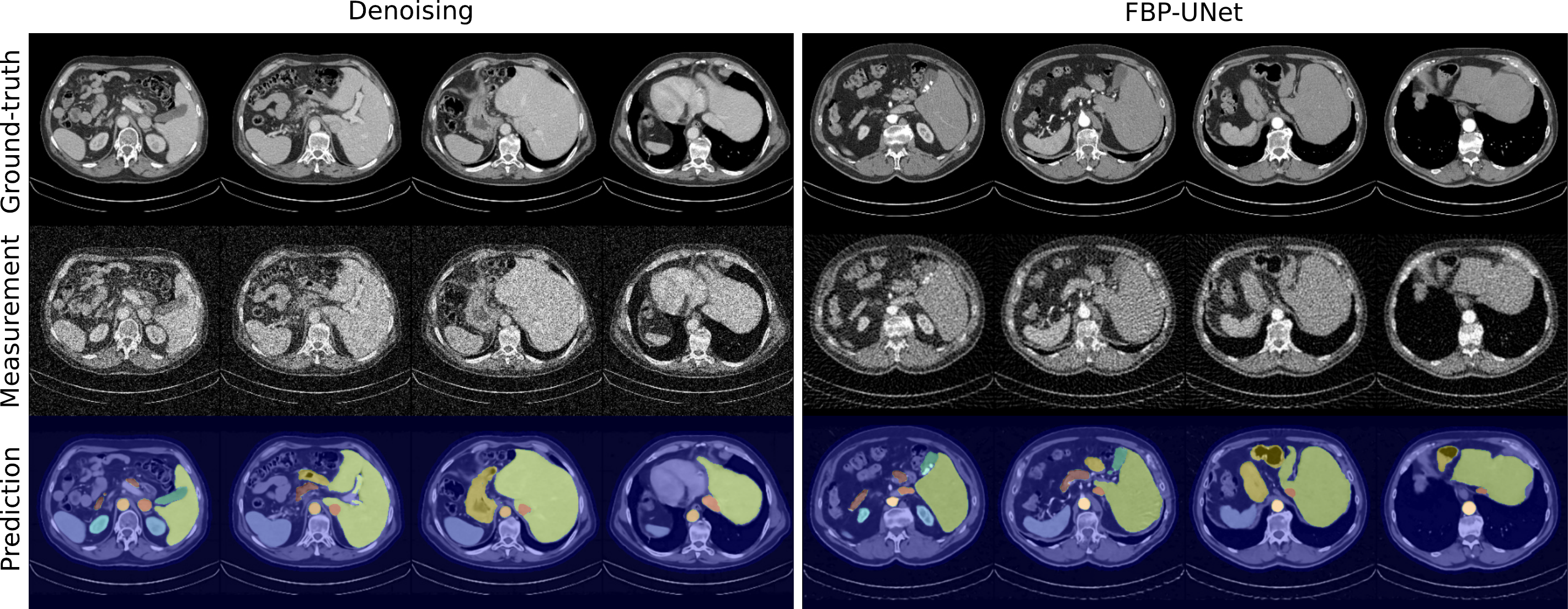}}
    \subcaptionbox{FLARE23.}{\includegraphics[width=\linewidth]{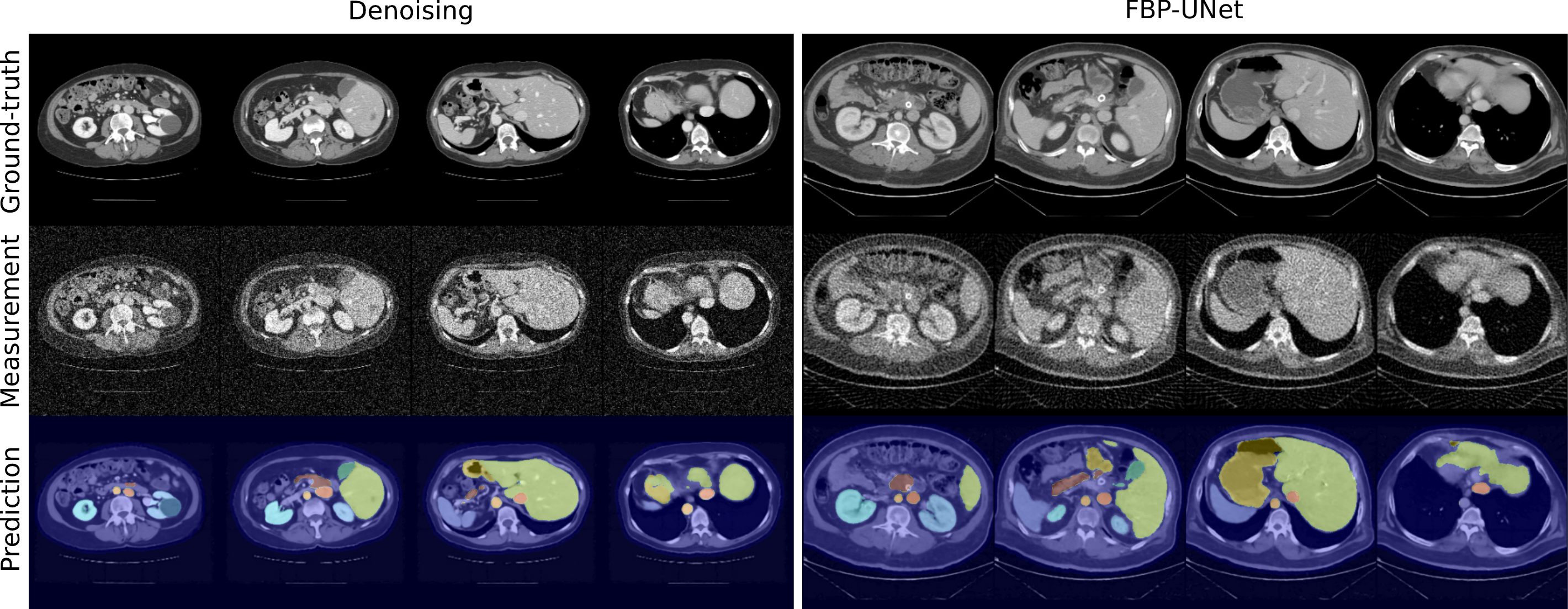}}
    \caption{\label{fig:prediction_results}Example calibration data: ground-truth, measurement, and segmented predictions for both tasks and datasets.}
\end{figure}

Start by noting that
\begin{equation}
    \I_{\vlambda_{\sem}}(y) = \frac{1}{d} \sum_{j \in [d]} (\q_{1-\alpha}(y)_j - \q_{\alpha}(y)_j) + \frac{1}{d} \sum_{k \in [K]} \lvert \S_k(y) \rvert\lambda_k
\end{equation}
where $\S_k(y) = \{j \in [d]: s(y)_j = k\}$ is the set of voxels that belong to organ $k$ for observation $y$. We can see that the mean interval length is still a function of the sum of the $\lambda_k$'s, but the multiplicative factors now depend on $y$ as well. So, it becomes necessary to minimize the mean interval length in expectation over $Y$. We extend the original optimization problem \eqref{eq:pk} to its semantic version
\begin{equation}\tag{P$sem$}
    \label{eq:psem}
    \tvlambda_{\sem} = \argmin_{\vlambda_{\sem} \in \R^K_{\geq 0}}~\sum_{k \in [K]} \E_Y[\lvert \S_k(Y) \rvert]\lambda_k~\quad\st\quad~\hl^{\gamma}_{\opt}(\vlambda_{\sem}) \leq \epsilon,
\end{equation}
where, in practice, we estimate the objective over $S_{\opt}$. To conclude, we choose
\begin{equation}
    \label{eq:semrcps}
    \hvlambda_{\sem} = \inf \left\{\vlambda_{\sem} \in \tvlambda_{\sem} + \omega \1_K: \frac{n_{\cal}}{n_{\cal} + 1} \hl^{01}_{\cal}(\vlambda_{\sem}) + \frac{1}{n_{\cal} + 1} \leq \epsilon\right\},
\end{equation}
and we state the validity of $\semCRC$ in the following proposition.\footnote{We present results for CRC, but our method generalizes to RCPSs as well.}
\begin{proposition}
    \label{prop:semrcps}
    For a risk tolerance $\epsilon > 0$, segmentation model $s: \Y \to [K]^d$, anchor point $\tvlambda_{\sem} \in \R^K_{\geq 0}$, and exchangeable calibration and test points $S_{\cal} = \{(X\at{i},Y\at{i})\}_{i=1}^{n_{\cal}}$, $(X,Y)$, the choice of $\hvlambda_{\sem}$ as in \eqref{eq:semrcps} provides risk control, i.e.
    \begin{equation}
        \E[\l^{01}(g_{\hvlambda_{\sem}}(Y),X)] \leq \epsilon.
    \end{equation}
\end{proposition}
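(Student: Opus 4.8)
The plan is to reduce the $K$-dimensional backtracking to the scalar CRC guarantee the excerpt already establishes for a single nested family (the result culminating in \eqref{eq:crc}). The observation that makes this work is that in \eqref{eq:semrcps} we do not search over all of $\R^K_{\geq 0}$ but only along the one-parameter line $\omega \mapsto \tvlambda_{\sem} + \omega \1_K$. So I would first introduce the scalar-indexed family $\{g_{\tvlambda_{\sem} + \omega \1_K}\}_{\omega \in \R}$ together with the induced loss $\omega \mapsto \l^{01}(g_{\tvlambda_{\sem} + \omega \1_K}(y),x)$, and argue that this is exactly an instance of the one-dimensional CRC setup, with $\omega$ playing the role of the scalar parameter $\lambda$.

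Next I would verify the two hypotheses the scalar result requires. Boundedness is immediate, since $\l^{01}$ takes values in $[0,1]$ by construction \eqref{eq:l01}. For monotonicity, fix an observation $y$; because $s$ is a fixed map, the membership vector $s(y) \in [K]^d$ is determined, and increasing $\omega$ replaces each pixel's parameter $\lambda_{s(y)_j} = (\tvlambda_{\sem})_{s(y)_j} + \omega$ by a larger value, so every interval $g_{\tvlambda_{\sem} + \omega \1_K}(y)_j$ grows and the predictor is nested in $\omega$. Hence $\omega \mapsto \l^{01}(g_{\tvlambda_{\sem} + \omega \1_K}(y),x)$ is non-increasing for every fixed $(x,y)$. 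I would stress here that the observation-dependence of the assignment $s(y)$ — the novel ingredient of $\semCRC$ — is irrelevant to this monotonicity argument, since $s(y)$ is frozen once $y$ is fixed; it affects only the interval length, and therefore the choice of anchor, not validity.

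With these in hand, the choice $\hvlambda_{\sem}$ of \eqref{eq:semrcps} is precisely the scalar CRC estimate $\hat\omega = \inf\{\omega : \frac{n_{\cal}}{n_{\cal}+1}\hl^{01}_{\cal}(\tvlambda_{\sem} + \omega \1_K) + \frac{1}{n_{\cal}+1} \leq \epsilon\}$ evaluated along the line, so applying the scalar guarantee directly yields $\E[\l^{01}(g_{\hvlambda_{\sem}}(Y),X)] \leq \epsilon$, with the expectation taken over the exchangeable tuple $S_{\cal} \cup \{(X,Y)\}$. The one point that needs care — and the main obstacle — is the independence requirement implicit in the scalar result: the nested family must be fixed before seeing the calibration data used in \eqref{eq:semrcps}. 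This is why the anchor $\tvlambda_{\sem}$ must be computed on a held-out split $S_{\opt}$ via \eqref{eq:psem} rather than on the data used for backtracking; I would make explicit that, conditioned on $S_{\opt}$, the family $\{g_{\tvlambda_{\sem} + \omega \1_K}\}$ is deterministic and the backtracking points remain exchangeable with the test point, so the hypotheses of the scalar CRC theorem are met and the conclusion follows.
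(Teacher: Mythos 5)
Your proposal is correct and follows essentially the same route as the paper's proof: reparametrize the backtracking line by the scalar $\omega$, observe that $\l^{01}(g_{\tvlambda_{\sem}+\omega\1_K}(y),x)$ is bounded and non-increasing in $\omega$ (with the fixedness of $s$ ensuring the per-sample loss functions are exchangeable), and invoke the one-dimensional CRC guarantee of \cite[Theorem 1]{angelopoulos2024conformal} applied to $\omega$. Your additional remarks---that the instance-dependence of $s(y)$ is irrelevant to monotonicity once $y$ is fixed, and that the anchor must come from the held-out split $S_{\opt}$ to keep the nested family deterministic---are points the paper makes only implicitly (in the proof's one-line exchangeability claim and in the remark following the proposition), so they are welcome elaborations rather than a different argument.
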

\begin{proof}
    Let $\vlambda_{\sem}(\omega) = \tvlambda_{\sem} + \omega\1_K$, $\omega \in \R$, and note that $\l^{01}(g_{\vlambda_{\sem}(\omega)}(y),x)$ is bounded by 1 and monotonically non-increasing in $\omega$. Since $s$ is fixed, the random functions $L_i(\omega) = \l^{01}(g_{\vlambda_{\sem}(\omega)}(Y\at{i}),X\at{i})$ and $L(\omega) = \l^{01}(g_{\vlambda_{\sem}(\omega)}(Y),X)$ are exchangeable. The result then follows by applying \cite[Theorem 1]{angelopoulos2024conformal} to $\omega$.
\end{proof}
We remark that $\semCRC$ also relies on splitting the calibration set $S_{\cal}$ into $S_{\opt}$ to solve \eqref{eq:psem}, and $\widetilde{S}_{\cal}$ to find $\hvlambda_{\sem}$ as in \eqref{eq:semrcps}. Furthermore, and naturally, the method requires performing inference with the same segmentation model used for calibration. We regard semantic calibration with respects to ground-truth segmentations as an extension of this work.

\paragraph{\textbf{Controlling risk for each organ.}} Clinical tasks may require different organs to have the same level of reconstruction accuracy, but $\hvlambda_{\sem}$ may overcover easy-to-reconstruct ones while undercovering others. Thus, we specialize $\semCRC$ to control risk with the same tolerance $\epsilon$ for each segmented structures, and we call this variation $\bsemCRC$. Denote
\begin{equation}
    \l^{01}_{k}(g_{\vlambda_{\sem}}(y), x) = \frac{1}{\lvert\S_k(y)\rvert} \sum_{j \in \S_k(y)} \1\{x_j \notin g_{\vlambda_{\sem}}(y)_j\}
\end{equation}
the proportion of pixels in organ $k$ (e.g., liver) that fall outside of their intervals, and let $\ve_k$ be the $k^{\text{th}}$ standard basis vector. The choice of $\hvlambda_{\bsem} \in \R^K_{\geq 0}$ with 
\begin{equation}
    \hlambda_{\bsem,j} = \inf\left\{\lambda \in \R_{\geq 0}: \frac{n_{\cal}}{n_{\cal} + 1} \hl^{01}_{k,\cal}(\tvlambda_{\sem} + \lambda\ve_k) + \frac{1}{n_{\cal} + 1} \leq \epsilon\right\}
\end{equation}
provides risk control for each organ, that is $\E[\l^{01}_k(g_{\hvlambda_{\bsem}}(Y), X)] \leq \epsilon$, $k = 1, \dots, K$. This follows by applying \cref{prop:semrcps} to each dimension of $\hvlambda_{\bsem}$. We briefly remark this is different from multiple risk control with one scalar $\lambda$ as in \cite{angelopoulos2024conformal}, and that the equivalent for RCPS requires multiple hypothesis testing correction for uniform coverage.

\begin{table}[t]
\centering
\caption{\label{table:results}Summary of calibration results as mean and standard deviation over 20 independent runs of each calibration procedure with risk tolerance $\epsilon = 0.10$.}
\begin{small}
\begin{tabular}{llcccc}
\toprule
                                    &               & \multicolumn{2}{c}{TotalSegmentator}                      & \multicolumn{2}{c}{FLARE23} \\ 
                                                    \cmidrule(r){3-4}                                           \cmidrule(r){5-6}
Task                                & Procedure     & Risk              & Length {\smallsize($\times 10^{-2}$)} & Risk              & Length {\smallsize ($\times 10^{-2})$} \\ \hline
\multirow{4}{*}{Denoising}     & $\CRC$        & $0.095 \pm 0.006$ & $11.60 \pm 0.21$                      & $0.096 \pm 0.004$ & $9.16 \pm 0.09$\\ 
                                    & $\kCRC$       & $0.097 \pm 0.006$ & $9.37 \pm 0.20$                       & $0.096 \pm 0.006$ & $6.81 \pm 0.21$\\ 
                                    & $\semCRC$     & $0.098 \pm 0.006$ & $\bm{8.72 \pm 0.18}$                  & $0.095 \pm 0.006$ & $\bm{6.36 \pm 0.11}$\\
                                    & $\bsemCRC$    & $0.055 \pm 0.004$ & $11.84 \pm 0.20$                      & $0.056 \pm 0.003$ & $8.06 \pm 0.16$\\
\midrule
\multirow{4}{*}{FBP-UNet}   & $\CRC$        & $0.098 \pm 0.007$ & $10.43 \pm 0.23$                      & $0.095 \pm 0.006$ & $6.19 \pm 0.09$ \\  
                                    & $\kCRC$       & $0.098 \pm 0.009$ & $9.32 \pm 0.13$                       & $0.095 \pm 0.003$ & $6.20 \pm 0.14$ \\  
                                    & $\semCRC$     & $0.097 \pm 0.007$ & $\bm{8.95 \pm 0.19}$                  & $0.095 \pm 0.006$ & $\bm{6.18 \pm 0.13}$ \\  
                                    & $\bsemCRC$    & $0.059 \pm 0.005$ & $12.43 \pm 0.20$                      & $0.057 \pm 0.003$ & $7.72 \pm 0.17$\\
\bottomrule
\end{tabular}
\end{small}
\end{table}

\section{Experiments}
We compare CRC, $\kCRC$, and $\semCRC$ for denoising and for a basic FBP-UNet reconstruction task on TotalSegmentator \cite{wasserthal2023totalsegmentator} ($1,429$ scans) and on the first 1,000 scans from the training split of the FLARE23 \cite{ma2022fast} challenge. We resample the FLARE23 dataset at \qtyproduct{1.5 x 1.5 x 3.0}{\mm} resolution, and we window all scans between $-175$\,HU and $250$\,HU. For denoising, we add independent Gaussian noise with $\sigma = 0.2$; for reconstruction, we use the ODL library \cite{odl} with ASTRA \cite{van2016fast,van2015astra} to simulate a helical cone beam geometry. We set the pitch adaptively to cover the entire volume in 8 turns, and acquire data over 1,000 angles with a detector of size \qtyproduct{512 x 128}{pixels}. We model low-dose measurement as linear Poisson noise with $I_0 = 1,000$. We chose these settings to highlight our method's performance on a challenging task. For each task, we use MONAI \cite{cardoso2022monai} to train a 3D UNet \cite{ronneberger2015u} ($\approx$ 5\,M parameters, ROI of $96^3$ voxels) with quantile regression ($\alpha = 0.1$, i.e. the $10^{\text{th}}$ and $90^{\text{th}}$ quantiles) on the AbdomenAtlas-8K dataset \cite{qu2023abdomenatlas} ($5,195$ scans).

\begin{figure}[t]
    \centering
    \subcaptionbox{TotalSegmentator.}{\includegraphics[width=0.49\linewidth]{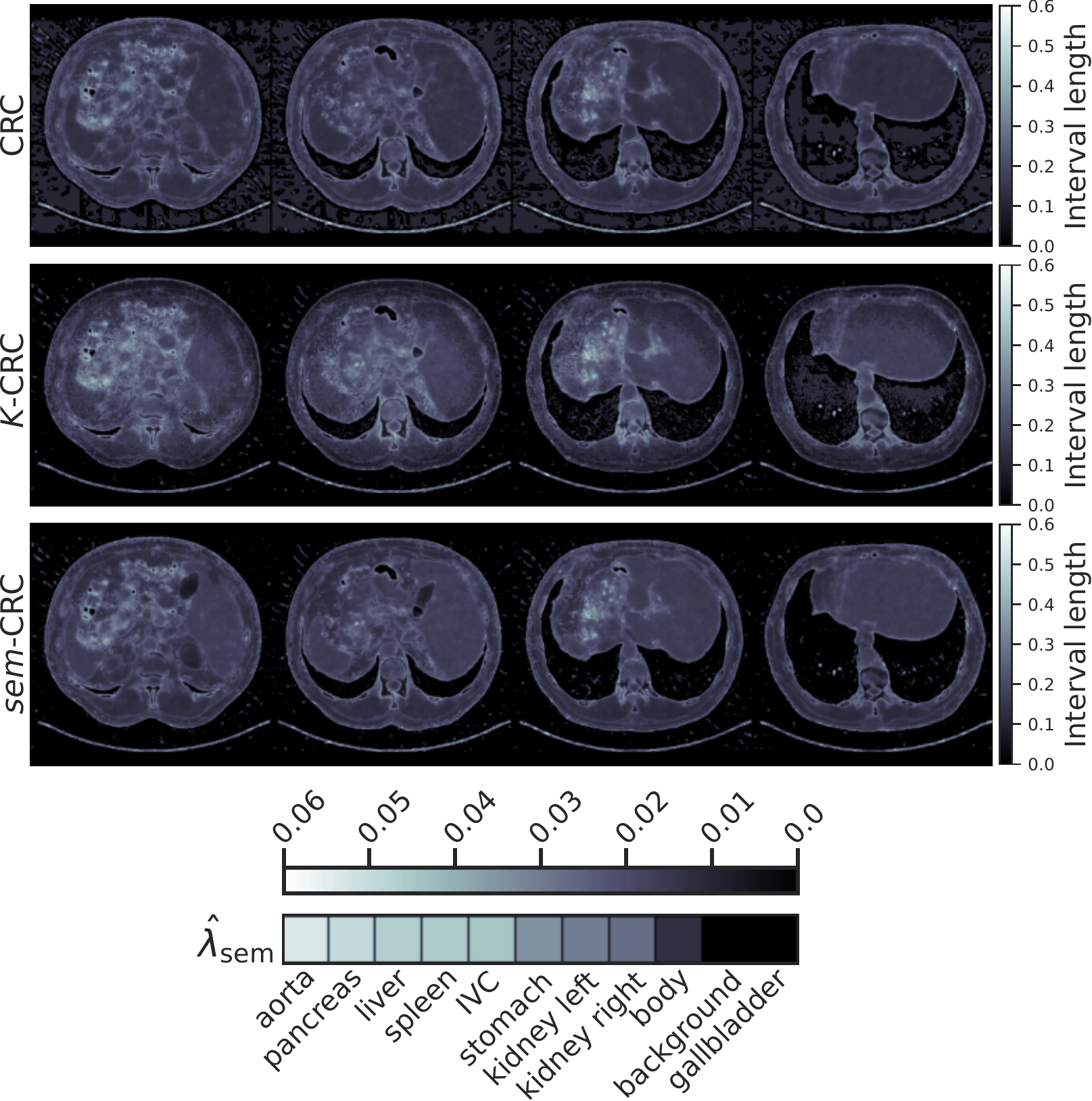}}
    \subcaptionbox{FLARE23.}{\includegraphics[width=0.49\linewidth]{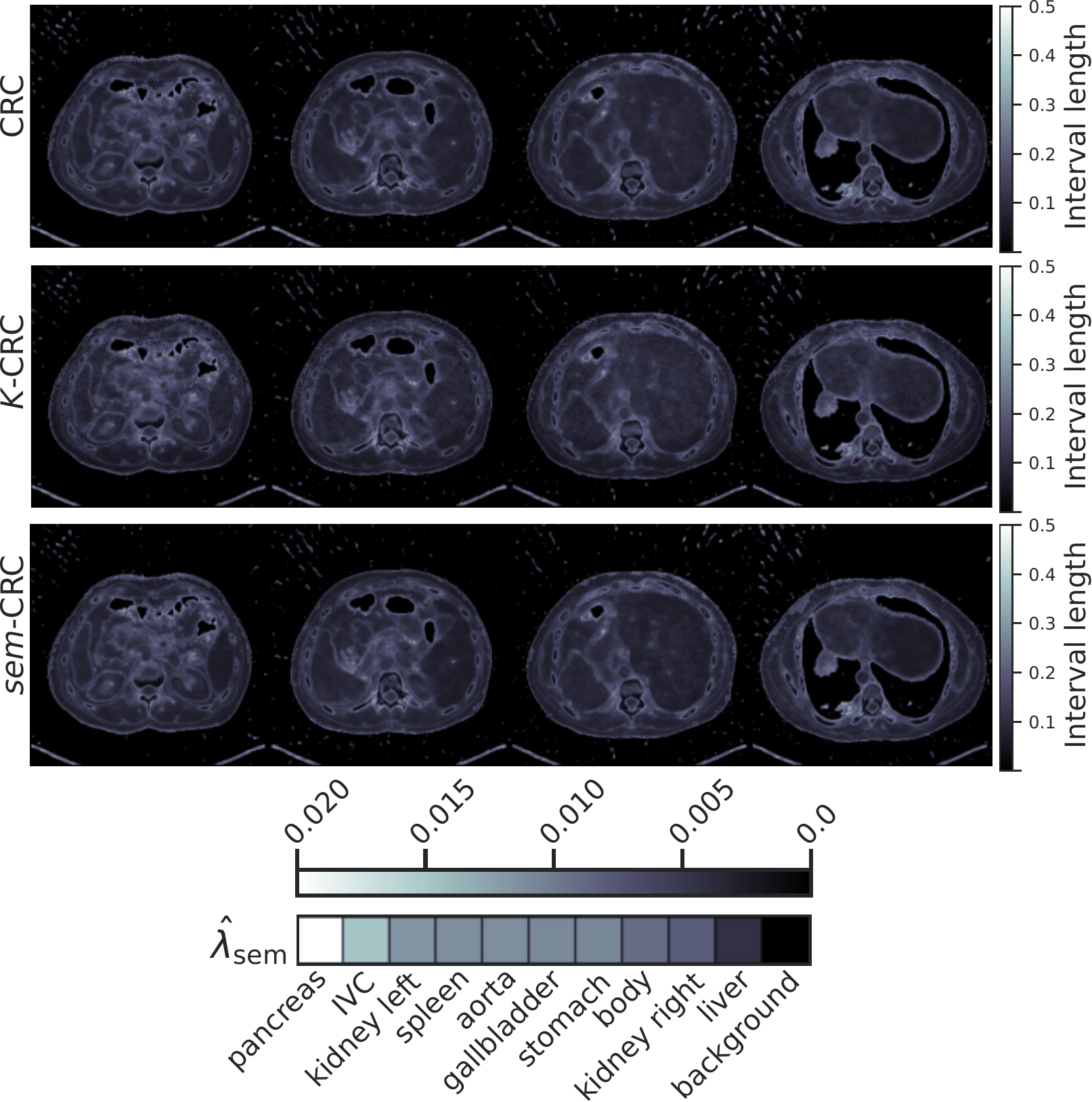}}
    \caption{\label{fig:uncertainty_maps}Example conformalized uncertainty maps on one volume per dataset with each calibration method for the FBP-UNet pipeline. The bottom row shows $\hvlambda_{\sem}$, the semantic uncertainty parameter learned by our method, $\semCRC$.}
\end{figure}

We segment 9 structures: spleen, kidneys, gallbladder, liver, stomach, aorta, inferior vena cava (IVC), and pancreas using SuPrem \cite{li2024well}, a state-of-the-art general-purpose segmentation model for medical imaging. All remaining voxels that are not background are labeled generically as ``body''. To solve \eqref{eq:psem} over a distribution of volumes that represents all organs, we select 4 equidistant slices from the window of $48$ that maximizes the segmentation volume. Finally, we center-crop or pad slices to \qtyproduct{256 x 256}{voxels} for calibration.

Since $\semCRC$ relies on a fixed segmentation model, we evaluate predictions in terms of mean structure-wise F1 score between the segmented outputs and the ground-truth annotations over 200 random volumes. For the TotalSegmentator dataset, we obtain $0.85 \pm 0.07$ and $0.83 \pm 0.08$ for denoising and FBP-UNet, respectively; and, equivalently, $0.88 \pm 0.06$ and $0.87 \pm 0.07$ for the FLARE23 dataset. Although we see a slight drop in performance compared to the metrics reported in \cite{li2024well}, these results confirm predictions are of reasonable quality for segmentation, and we include some examples in \cref{fig:prediction_results}.

\begin{figure}[t]
    \centering
    \subcaptionbox{TotalSegmentator.}{\includegraphics[width=0.49\linewidth]{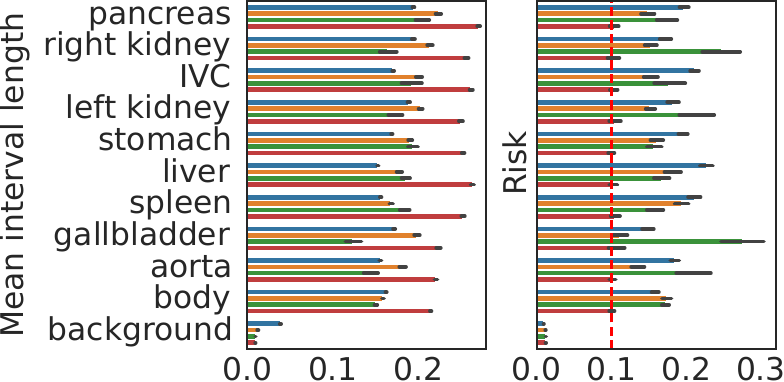}}
    \hfill
    \subcaptionbox{FLARE23.}{\includegraphics[width=0.49\linewidth]{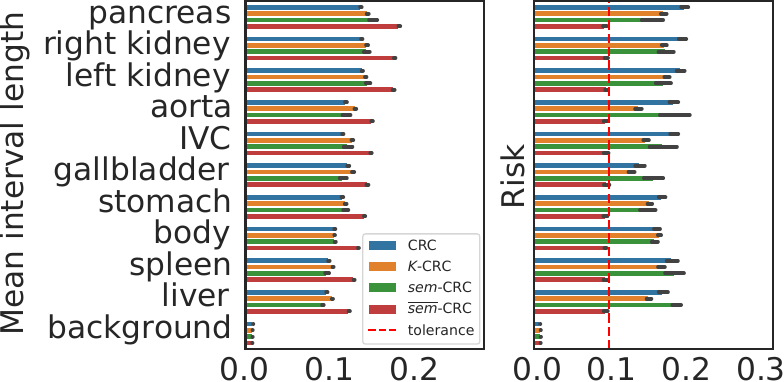}}
    \caption{\label{fig:organ_loss}Mean interval length and risk stratified by organ for the FBP-UNet task across all calibration procedures and datasets. $\bsemCRC$ is the only procedure that guarantees risk control for each organ.}
\end{figure}

We set the error tolerance to $\epsilon = 0.10$, allowing at most 10\% of ground-truth voxels to fall outside their prediction intervals. Each calibration procedure is run 20 times on independent subsets of $n_{\cal} = 512$ scans, with risk estimated on $n_{\text{test}} = 128$ scans. We allocate $n_{\opt} = 32$ calibration samples to solve \eqref{eq:pk} and \eqref{eq:psem}, ensuring a fair comparison across methods. For $\kCRC$, we follow \cite{teneggi2023trust} and construct the assignment matrix $M$ by grouping voxels into $K = 4$ quantiles of the loss on the optimization set. Finally, to solve \eqref{eq:pk} efficiently, we subsample $d_{\opt} = 50$ voxels (much smaller than $256^2$) stratified by membership; and for \eqref{eq:psem}, we ensure the smallest organ has a support of at least $d_{\min} = 2$ voxels by subsampling $d_{\opt} = d_{\min} / \min_{k} \E[\lvert \S_k \rvert]$ dimensions ($d_{\opt} \approx 3,000$). Solving subsampled problems reduces complexity to the order of seconds. 

\cref{table:results} summarizes risk and mean interval length across all datasets and tasks. All procedures are \emph{valid}, i.e. they control risk at level $\epsilon$. Our method, $\semCRC$, consistently provides the shortest uncertainty intervals. On the other hand, and as expected, controlling risk for each organ with $\bsemCRC$ increases the mean interval length. \cref{fig:uncertainty_maps} compares the conformalized uncertainty maps obtained with each method on the same volume, and it includes the vector $\hvlambda_{\sem}$ learned by $\semCRC$. The uncertainty maps generated by $\semCRC$ are sharper and contain fewer artifacts thanks to using instance-level information. Furthermore, $\hvlambda_{\sem}$ directly informs on which organs have higher levels of uncertainty, depicting how the same model may display different uncertainty patterns across different populations. These findings are fundamental to the responsible use of general-purpose machine learning models across centers serving diverse demographics. Finally, \cref{fig:organ_loss} highlights the difference between controlling risk for each organ or cumulatively over a volume: all methods but $\bsemCRC$ achieve risk control by overcovering background and undercovering organs. Our methodology gives users the flexibility to specify which organs they desire to control risk for depending on the clinical task at hand.

\section{Conclusions}
Modern deep learning models are widely used for image reconstruction, including computed tomography. However, they often provide only point-wise estimates, lacking statistically valid uncertainty measures. This work proposes a conformal prediction approach that generates uncertainty intervals with controlled risk at any user-specified level. By integrating high-dimensional calibration and state-of-the-art segmentation models, our method, $\semCRC$, produces organ-dependent uncertainty sets that are adaptive to each patient. Moreover, it can control risk \emph{for each organ}. Not only does $\semCRC$ provide the tightest uncertainty set, but also it communicates findings with clinically meaningful anatomical structures.

\subsubsection*{Acknowledgments}
This research was supported by NSF CAREER Award CCF 2239787.

\bibliographystyle{plainnat}
\bibliography{bibliography}
\end{document}